\newcommand{\appref}[1]{Appendix~\ref{#1}}
\newcommand{\sectref}[1]{Section~\ref{#1}}
\newcommand{\figref}[1]{Figure~\ref{#1}}
\newcommand{\tabref}[1]{Table~\ref{#1}}
\newcommand{\thmref}[1]{Theorem~\ref{#1}}
\newcommand{\agref}[1]{Algorithm~\ref{#1}}
\newcommand{\sectsectref}[2]{Sections~\ref{#1} and \ref{#2}}
\newcommand{\tabtabref}[2]{Tables~\ref{#1} and \ref{#2}}
\newcommand{\figfigfigref}[3]{Figures~\ref{#1}, \ref{#2} and \ref{#3}}
\newcommand{\tabtabtabref}[3]{Tables~\ref{#1}, \ref{#2} and \ref{#3}}
\newtheorem{theorem}{Theorem}
\newcommand{\startpara}[1]{{\vskip1pt\noindent{\bf #1.}}} 
\renewcommand{\url}[1]{{\def~{\char126}\sf#1}}
\def\cD{{\mathcal{D}}}
\def\cE{{\mathcal{E}}}
\def\cF{{\mathcal{F}}}
\def\cX{{\mathcal{X}}}
\def\cT{{\mathcal{T}}}
\def\cV{{\mathcal{V}}}
\def\cX{{\mathcal{X}}}
\def\sA{{\mathsf{A}}}
\def\sB{{\mathsf{B}}}
\def\sC{{\mathsf{C}}}
\def\sD{{\mathsf{D}}}
\def\sE{{\mathsf{E}}}
\newcommand{\trace}[1]{\mathsf{trace}{(#1)}}
\title{Explaining Decentralized Multi-Agent Reinforcement Learning Policies}
\author {
    Kayla Boggess\textsuperscript{\rm 1},
    Sarit Kraus\textsuperscript{\rm 2},
    Lu Feng\textsuperscript{\rm 1}
}
\begin{document}

\maketitle

\begin{abstract}
Multi-Agent Reinforcement Learning (MARL) has gained significant interest in recent years, enabling sequential decision-making across multiple agents in various domains. However, most existing explanation methods focus on centralized MARL, failing to address the uncertainty and nondeterminism inherent in decentralized settings. We propose methods to generate policy summarizations that capture task ordering and agent cooperation in decentralized MARL policies, along with query-based explanations for “When,” “Why Not,” and “What” types of user queries about specific agent behaviors. We evaluate our approach across four MARL domains and two decentralized MARL algorithms, demonstrating its generalizability and computational efficiency. User studies show that our summarizations and explanations significantly improve user question-answering performance and enhance subjective ratings on metrics such as understanding and satisfaction.
\end{abstract}



\section{Introduction} \label{sec:intro} 

Multi-Agent Reinforcement Learning (MARL) has gained significant interest in recent years, enabling multi-agent sequential decision-making across various domains such as autonomous driving~\cite{dinneweth2022multi} and multi-robot warehousing~\cite{krnjaic2022scalable}. 
Recent works have explored generating explanations for MARL policies to enhance system transparency, improve user understanding, and foster human-agent collaboration~\cite{boggess2022toward,boggess2023explainable}. 
However, these prior efforts are primarily limited to \emph{centralized} MARL frameworks, where joint policies are learned and executed with full observability.
Such methods cannot adequately address the uncertainty, nondeterminism, and limited observability inherent in \emph{decentralized} MARL settings, which are common in real-world applications with communication or scalability constraints.

This work addresses this gap by introducing methods for generating policy summarizations and query-based explanations for decentralized MARL policies. 
Our approach is the first to summarize and explain agent coordination and task ordering under decentralized execution, enabling users to interpret multi-agent behavior even when individual agents act independently and only have local observations.

For example, consider a search and rescue mission where multiple cooperative robots follow a decentralized MARL policy.
A human operator in the field receives decision-making support via an explainer that provides high-level policy summaries and answers user queries using real-time trajectory data.
The summaries help the operator understand general robot behaviors—task completion, agent cooperation, task order—while the query-based explanations answer specific questions, such as:
\emph{``When do [agents] complete [task]?’’},
\emph{``Why don’t [agents] complete [task] under [conditions]?’’}, or
\emph{``What do the agents do after [task]?’’}.
With this information, the operator can make informed decisions, such as prioritizing urgent tasks or allocating resources more effectively.

A key challenge in supporting such explanations is representing the uncertain, asynchronous execution of decentralized policies.
Each agent's policy governs only its local behavior, possibly unaware of others' actions, making it difficult to infer global task order or cooperation from raw trajectories alone.

To tackle this, we develop a novel algorithm that constructs \emph{Hasse diagram}-based summarizations from trajectories generated under decentralized execution.
Each diagram is a directed acyclic graph where nodes represent tasks (annotated with the agents that completed them), and edges encode partial-order constraints over task completion times.
The resulting diagrams compactly capture both coordination and uncertainty: branching edges represent nondeterminism in task order, while nodes annotated with multiple agents indicate cooperation on shared tasks.

Building on this, we develop query-based explanation methods for three types of user queries: “When?”, “Why not?”, and “What?”.
Given a set of Hasse diagrams, we derive abstract states that encode key features such as task completions and agent involvement.
To capture uncertainty, we introduce an \emph{uncertainty dictionary} derived from partial comparability graphs that summarize unordered task dependencies across episodes.
We then apply the Quine-McCluskey algorithm~\cite{quine1952problem} to extract minimal Boolean formulas, which are translated into natural language explanations using structured templates, with uncertain features explicitly expressed using ``may'' conditions.

We evaluate our method’s generalizability and computational efficiency across four benchmark MARL domains, scaling to settings with up to 19 tasks and 9 agents.
To demonstrate the algorithm-agnostic nature of our method, we apply it to two distinct MARL algorithms that both yield decentralized policies but differ in their training paradigms: centralized training versus decentralized training.

Finally, we assess the effectiveness of our summarizations and explanations via two user studies measuring objective task performance and subjective ratings.
Results show that our approach significantly improves user question-answering accuracy and boosts subjective ratings such as understanding and satisfaction. 

Together, these contributions bridge the gap between opaque decentralized MARL policies and interpretable, human-centered explanations, enabling effective human-agent collaboration in multi-agent environments.

\section{Related Work} \label{sec:related} 

\startpara{Multi-Agent Reinforcement Learning}
MARL algorithms are commonly categorized by their training and execution paradigms.
\emph{Centralized training and centralized execution} (CTCE) methods train a single policy using full observability of the environment and agent states~\cite{marl-book}. 
\emph{Centralized training with decentralized execution} (CTDE) methods, such as SEAC~\cite{christianos2020shared}, use global information during training but deploy policies that operate on local observations at execution time.
\emph{Decentralized training and decentralized execution} (DTDE) methods, including independent learning approaches~\cite{papoudakis2021benchmarking}, train each agent’s policy independently, treating other agents as part of the environment.
This work focuses on post-hoc summarization and explanation of both CTDE and DTDE policies in cooperative settings.

\startpara{Policy Summarization}
Explainable RL (XRL) has received increasing attention, as surveyed in~\cite{milani2023explainable,wells2021explainable}, though most prior work targets single-agent settings.
For instance, \cite{topin2019generation} introduces \emph{abstract policy graphs}, which represent agent behavior as Markov chains over abstract states, and \cite{mccalmon2022caps} improves their comprehensibility.
\cite{amir2018highlights} visualizes agent behavior via representative trajectory videos.
In the multi-agent domain, \cite{milani2022maviper} uses intrinsically interpretable decision trees, while \cite{boggess2022toward} abstracts centralized MARL policies into macro-actions over joint state-action trajectories to identify common agent behaviors.
However, all of these methods assume a single input policy, whether from a single agent or a centralized multi-agent controller.
In contrast, we consider decentralized settings where each agent has its own policy and actions may require inter-agent coordination. We develop a scalable method to summarize such decentralized executions using compact, structured representations.

\startpara{Query-Based Explanations}
Post-hoc explanations in single-agent RL often rely on abstract state representations~\cite{hayes2017improving,sreedharan2022bridging}, saliency maps~\cite{atrey2019exploratory}, causal models~\cite{madumal2020explainable}, or reward decompositions~\cite{juozapaitis2019explainable}.
Several works have extended these ideas to multi-agent systems, but typically assume centralized control or non-cooperative agents.
For example, \cite{boggess2022toward} applies abstract policy graphs to centralized MARL, \cite{heuillet2022collective,mahjoub2024efficiently,chen2025understanding} compute agent contributions to joint policies, and \cite{kottinger2021maps} visualizes action assignments in joint plans.
\cite{mualla2022quest} proposes a framework for generating parsimonious explanations for teams of BDI agents.
However, these methods do not support inter-agent cooperation under decentralized policies and often aggregate agent behavior in a naive or disjointed manner.

\emph{To our knowledge, this is the first work to generate both policy summarizations and query-based explanations for decentralized MARL policies.}

\section{Policy Summarization} \label{sec:sum} 

\startpara{Decentralized MARL Policies}
Consider $N$ agents, each with a decentralized MARL policy $\pi^i: s^i \rightarrow \Delta(a^i)$ mapping local state $s^i$ to a distribution over actions $a^i$. Agents act asynchronously due to decentralized execution without a global clock. Joint tasks are assumed to be completed simultaneously, with each agent observing only its own contribution and reward.
Executing these policies yields trajectories $\{\omega^i\}_{i=1}^N$, where $\omega^i = s^i_0, a^i_0, r^i_0, s^i_1, \cdots$ records transitions observed by agent $i$. A task sequence $\trace{\omega^i} = \tau^i_1, \tau^i_2, \cdots$ can be extracted from each trajectory $\omega^i$, where a completed task $\tau$ is inferred from reward signals and state transitions.

\startpara{Problem Statement}
Given decentralized MARL policies and their execution trajectories, how can we generate a compact, interpretable summary that captures both individual and joint agent behaviors? We seek a representation that ensures \emph{correctness}, meaning each agent’s behavior in the summary aligns with its actual task sequence, and \emph{completeness}, meaning each agent’s full task sequence is captured in at least one path in the summary.

\begin{figure*}[t]
    \centering
    \includegraphics[width=0.8\linewidth]{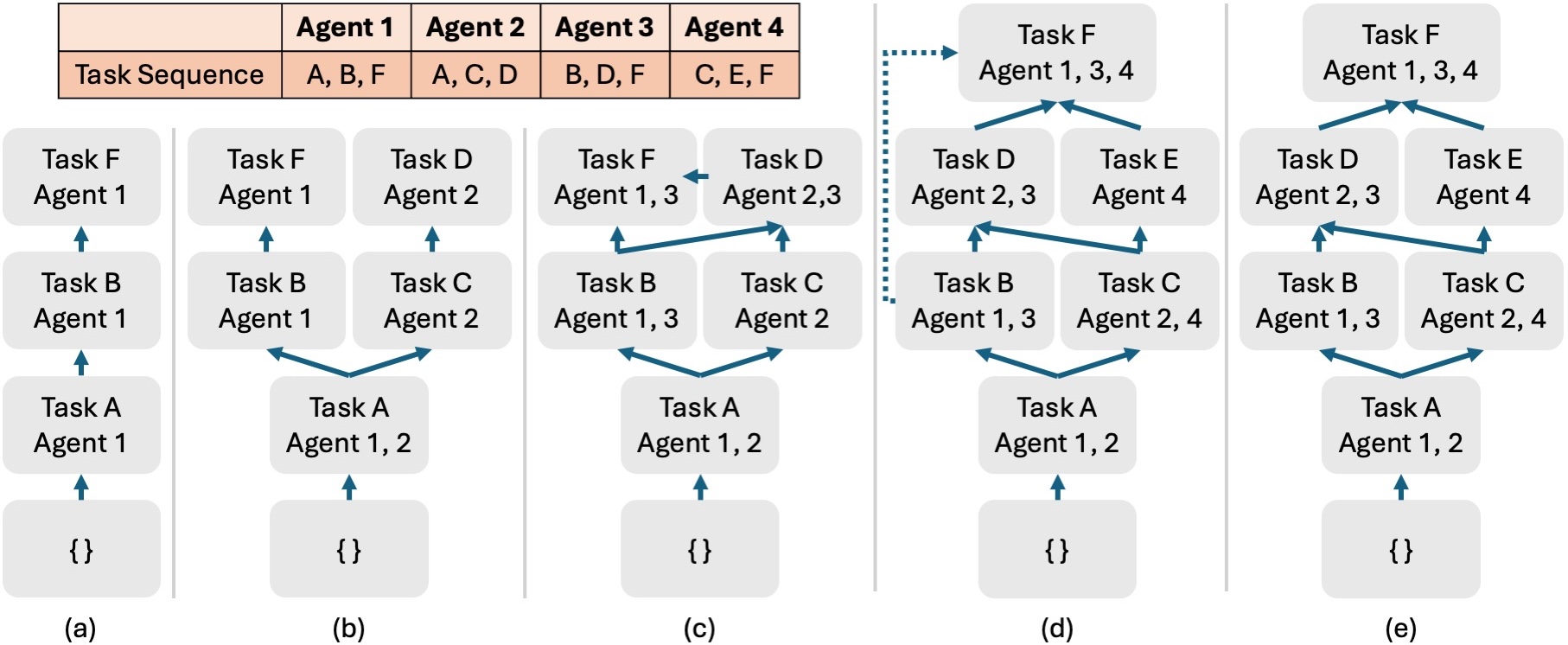}
    \caption{Example of \agref{ag:hds} constructing a Hasse diagram incrementally: steps (a)–(d) incorporate each agent's task sequence, and step (e) applies transitive reduction.}
    \label{fig:example}
\end{figure*}

\startpara{Hasse Diagram Summarization}
We propose to summarize decentralized agent behavior using a Hasse diagram $\cD = (\cV, \cE)$, a directed acyclic graph that represents a partial order over task completions~\cite{sarkar2017textbook}. Each vertex denotes a set of tasks completed simultaneously and the agents that perform them. Edges encode precedence: $v \prec v’$ indicates that tasks in $v$ must precede those in $v’$.
A path $\rho = v_0 \to v_1 \to \cdots$ through $\cD$ defines a possible task ordering. The projection of $\rho$ onto agent $i$, denoted $\rho^i$, retains only tasks performed by agent $i$. We say $\rho^i$ conforms to the agent’s trajectory $\trace{\omega^i}$ if $\rho^i \sqsubseteq \trace{\omega^i}$—i.e., the task order is preserved.

Formally, the diagram $\cD$ is \emph{correct} if, for all paths $\rho$ and agents $i$, either $\rho^i = \emptyset$ or $\rho^i \sqsubseteq \trace{\omega^i}$. It is \emph{complete} if, for every agent $i$, there exists a path $\rho$ such that $\rho^i = \trace{\omega^i}$.

We present \agref{ag:hds}, which constructs a correct and complete Hasse diagram $\cD$ from a single episode of decentralized execution trajectories $\{\omega^i\}_{i=1}^N$, agnostic to how the policies are trained (e.g., CTDE, DTDE). The algorithm iterates through task sequences, creates nodes for new tasks, identifies agents for shared tasks, and inserts edges to maintain local task order. Finally, a transitive reduction is applied to eliminate redundant edges. 

\figref{fig:example} shows an example of applying \agref{ag:hds}, illustrating the incremental construction of a Hasse diagram summarizing agents' behavior.

\begin{algorithm}[t]
\caption{Hasse Diagram Summarization (HDS)}
\label{ag:hds}
{\small
\textbf{Input}: Agent trajectories $\{\omega^i\}_{i=1}^N$ \\
\textbf{Output}: Hasse Diagram $\cD = (\cV, \cE)$
\begin{algorithmic}[1]
\STATE Initialize: $v_0 \gets \emptyset$; $\cV \gets \{v_0\}$; $\cE \gets \emptyset$
\FOR{each agent $i = 1$ to $N$}
    \STATE $T^i \gets \trace{\omega^i}$
    \FOR{each task index $k = 1$ to $|T^i|$}
        \STATE $\tau \gets T^i_k$
        \IF{$\tau$ exists in some vertex $v \in \cV$}
            \STATE Add agent $i$ to $v[\tau]$
        \ELSE
            \STATE Create new vertex $v'$ with $v'[\tau] = \{i\}$; $\cV \gets \cV \cup \{v'\}$
        \ENDIF
        \IF{$k = 1$}
            \STATE Add edge $(v_0 \to v)$ or $(v_0 \to v')$ to $\cE$
        \ELSE
            \STATE Let $\tau_{\text{prev}} \gets T^i_{k-1}$
            \STATE Find vertex $\bar{v} \in \cV$ containing $\tau_{\text{prev}}$
            \STATE Add edge $(\bar{v} \to v)$ or $(\bar{v} \to v')$ to $\cE$ if not present
        \ENDIF
    \ENDFOR
\ENDFOR
\FOR{each edge $(u \to v) \in \cE$}
    \IF{a path from $u$ to $v$ exists excluding edge $(u \to v)$}
        \STATE Remove edge $(u \to v)$ from $\cE$
    \ENDIF
\ENDFOR
\RETURN $\cD = (\cV, \cE)$
\end{algorithmic}
}
\end{algorithm}

\startpara{Complexity and Guarantee}
The worst-case time complexity of \agref{ag:hds} is $\mathcal{O}(N \cdot |T|^2 + |T|^4)$, where $N$ is the number of agents and $|T|$ is the number of tasks. The $\mathcal{O}(N \cdot |T|^2)$ term covers task-wise updates across $N$ trajectories, and the $\mathcal{O}(|T|^4)$ term arises from transitive reduction over a graph with up to $|T|^2$ edges.

\begin{theorem} \label{thm:hds}
Given a set of agent trajectories $\{\omega^i\}_{i=1}^N$ produced by executing decentralized MARL policies $\{\pi^i\}_{i=1}^N$ in a single episode, the Hasse diagram $\cD = (\cV, \cE)$ constructed by \agref{ag:hds} is both a correct and complete policy summarization. (Proof provided in \appref{app:proof}.)
\end{theorem}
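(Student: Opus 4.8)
The plan is to establish, in order: (i) that $\cD$ is a directed acyclic graph (so that ``Hasse diagram'' is justified and the transitive-reduction step is well defined), (ii) completeness, and (iii) correctness. The ingredient I would make explicit at the outset is the modeling assumption already implicit in the setup: task completions lie on a shared (possibly unobserved) timeline — joint tasks complete simultaneously, each task has a single completion time, and no agent completes a given task twice in an episode — so there is a function $t$ mapping tasks to real completion times that is strictly increasing along every trace $\trace{\omega^i}$, and in particular no two agents disagree about the order of shared tasks. For (i), extend $t$ to $\cV$ by letting $t(v)$ be the time of the task stored in $v$ and $t(v_0) := -\infty$. Every edge \agref{ag:hds} inserts is either $v_0 \to v$, or an edge from the vertex holding $T^i_{k-1}$ to the vertex holding $T^i_k$ for some agent $i$ and index $k$; in both cases $t$ strictly increases across the edge, so $\cD$ is acyclic before and after transitive reduction, and transitive reduction preserves the reachability relation. (I would remark that this is exactly where the shared-timeline assumption is used: on mutually order-inconsistent traces — one agent seeing $\sigma,\tau$, another $\tau,\sigma$ — the algorithm could build a cycle, and the statement would fail.)

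For completeness, fix an agent $i$ and write $T^i = \trace{\omega^i}$ with $|T^i| = n$. Since \agref{ag:hds} never deletes a vertex, never moves a task between vertices, and keeps each task in a unique vertex (it reuses an existing vertex whenever the task already occurs), the vertex $w_k$ holding $T^i_k$ is well defined and $w_0 := v_0, w_1, \dots, w_n$ are pairwise distinct. When the algorithm processes the $k$-th task of agent $i$ it inserts or finds the edge $w_{k-1} \to w_k$, so $w_0 \to w_1 \to \cdots \to w_n$ is a walk in the pre-reduction graph. After transitive reduction each such edge may be gone, but $w_k$ is still reachable from $w_{k-1}$, so there is a directed path $Q_k$ from $w_{k-1}$ to $w_k$ in $\cD$; concatenating $Q_1 \cdots Q_n$ gives a walk from $v_0$ through $w_1, \dots, w_n$ in order, and since $t$ strictly increases along it, it is actually a simple path $\rho$. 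The key point is that no vertex strictly interior to some $Q_k$ carries a task of agent $i$: such a task would be $T^i_j$ with $t(T^i_{k-1}) < t(T^i_j) < t(T^i_k)$, impossible because $t(T^i_\cdot)$ is strictly monotone in the index and $k-1,k$ are consecutive. Hence $\rho^i = T^i_1, \dots, T^i_n = \trace{\omega^i}$. (The case $n=0$ is handled by the trivial path $\rho = v_0$.)

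For correctness, take any path $\rho = u_0 \to \cdots \to u_m$ in $\cD$ and any agent $i$. If no $u_j$ records a task of agent $i$, then $\rho^i = \emptyset$ and we are done; otherwise let the agent-$i$ vertices along $\rho$ be $u_{j_1}, u_{j_2}, \dots$ with $j_1 < j_2 < \cdots$, carrying tasks $T^i_{p_1}, T^i_{p_2}, \dots$. Because $u_{j_\ell}$ reaches $u_{j_{\ell+1}}$ along $\rho$, we get $t(T^i_{p_\ell}) < t(T^i_{p_{\ell+1}})$, and strict monotonicity of $t$ along $T^i$ forces $p_\ell < p_{\ell+1}$. Thus $\rho^i$ is the subsequence $T^i_{p_1}, T^i_{p_2}, \dots$ of $T^i$ in increasing index order, i.e. $\rho^i \sqsubseteq \trace{\omega^i}$, which is correctness.

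The main obstacle I expect is completeness (step ii): one must argue not just that the $w_k$ stay connected after transitive reduction — that is immediate from reachability preservation — but that the substitute sub-paths $Q_k$ cannot smuggle in extra agent-$i$ tasks and cannot revisit a vertex, and I would pin both of these down using the single ``clock'' $t$. A secondary point requiring care is making the shared-timeline assumption explicit and, if the paper prefers, restating it as a hypothesis that the input traces are pairwise order-consistent, since without it $\cD$ need not even be acyclic; the acyclicity argument in step (i) and the strict monotonicity used throughout both hinge on it.
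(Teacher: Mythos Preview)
Your proof is correct and follows the same high-level strategy as the paper --- show that the edges inserted by \agref{ag:hds} respect each agent's local task order, that transitive reduction preserves reachability, and derive completeness and correctness from these facts --- but you execute it with considerably more care. The paper's argument is terse: for correctness it essentially asserts that $v \prec v'$ in the diagram implies temporal precedence in every agent's trace, which is precisely the content being proved; for completeness it says the path corresponding to $\trace{\omega^i}$ ``is preserved'' after transitive reduction without checking that the substitute sub-paths introduce no extra agent-$i$ vertices. Your explicit global clock $t$ is the device that closes both gaps cleanly, and it also yields acyclicity (hence well-definedness of the Hasse diagram and of transitive reduction), which the paper does not address at all. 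Making the shared-timeline assumption explicit is a genuine improvement: the paper takes it for granted via the clause ``joint tasks are assumed to be completed simultaneously,'' but as you correctly observe, on order-inconsistent traces the algorithm can create a cycle and the theorem fails.
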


\startpara{Practical Considerations}
In large environments, users are often interested in only a subset of agents or tasks (e.g., nearby robots in a search and rescue scenario). Our method supports selective summarization by restricting input to relevant agent trajectories and applying task filters during sequence extraction.

Because decentralized execution is stochastic, different episodes may yield different Hasse diagrams. To summarize observed behaviors, we rely on actual trajectories; to capture potential future behaviors, we can simulate multiple episodes and report a representative diagram, such as the most frequent one.

\section{Query-Based Explanations} \label{sec:query} 

While Hasse diagrams summarize global behavior, they do not explain local decisions—such as \emph{when} agents choose to perform a task, \emph{why} they fail to do so under certain conditions, or \emph{what} they do next. To address this gap, we develop methods that generate language-based explanations in response to user queries. Our approach builds on prior work in query-based explanations for single-agent~\cite{hayes2017improving} and centralized multi-agent settings~\cite{boggess2022toward}, but introduces new techniques to address the uncertainty and partial observability inherent in decentralized execution.

\subsection{Answering “When” Queries}\label{sec:when}

We consider queries of the form: \textit{“When do agents $\mathcal{G}_q$ perform task $\tau_q$?”}, aiming to identify the necessary conditions under which $\tau_q$ is completed by agent group $\mathcal{G}_q$ across multiple simulated executions. \agref{ag:when} outlines our method for generating language-based explanations for such queries.

We begin by extracting a subset of features $\mathcal{F}_q \subseteq \mathcal{F}$ that are relevant to the query task, using domain knowledge. For example, for the query “When do agents 2 and 4 do task $\sC$?", relevant features may include boolean predicates indicating whether agents 2 or 4 complete task $\sC$, as well as whether potentially prerequisite tasks (e.g., task $\sA$, $\sB$, etc.) have been completed.

\begin{algorithm}[t]
\caption{``When'' Query-Based Explanation}
\label{ag:when}
{\small
\textbf{Input:} Agent group $\mathcal{G}_q$, query task $\tau_q$, Hasse diagrams $\{\mathcal{D}_j\}$, and feature set $\mathcal{F}$ \\
\textbf{Output:} Language-based explanation $\cX$
\begin{algorithmic}[1]
\STATE Extract relevant features $\mathcal{F}_q \subseteq \mathcal{F}$ for $\tau_q$
\STATE Initialize uncertainty dictionary $U \leftarrow \emptyset$
\FOR{each diagram $\mathcal{D}_j$}
  \IF{$\tau_q$ completed by $\mathcal{G}_q$ in $\mathcal{D}_j$}
    \STATE Let $v_\tau$ be the node where $\tau_q$ is completed
    \STATE Compute partial comparability graph from $v_\tau$
    \FOR{each node $v$ not reachable to/from $v_\tau$}
      \STATE Add features associated with $v$ to $U[\mathcal{D}_j]$
    \ENDFOR
  \ENDIF
\ENDFOR
\STATE Label nodes as targets (containing $\tau_q$ by $\mathcal{G}_q$) or non-targets
\STATE Encode nodes as boolean vectors over $\mathcal{F}_q$ using $U$
\STATE Apply Quine-McCluskey to distinguish targets/non-targets
\STATE Translate resulting formula into explanation $\cX$ 
\RETURN $\cX$
\end{algorithmic}
}
\end{algorithm}

Given a set of Hasse diagrams $\{\mathcal{D}_j\}$ summarizing multiple episodes of decentralized policy execution, we check, for each diagram $\mathcal{D}_j$, whether the query task $\tau_q$ is completed by the queried agent group $\mathcal{G}_q$. If so, we identify the corresponding node $v_{\tau}$ and construct a \emph{partial comparability graph}~\cite{kelly1985comparability} centered at $v_{\tau}$, which includes only nodes with a known ordering relative to $v_{\tau}$ (i.e., those reachable via the Hasse diagram’s edges in either direction).

To handle partial observability and structural ambiguity in decentralized execution, we introduce an \emph{uncertainty dictionary} $U$. Any node that is not reachable to or from $v_{\tau}$ is considered unordered with respect to the query task. Features associated with such nodes are marked as \emph{uncertain} and stored in $U[\mathcal{D}_j]$.
For example, if task $\sB$ appears in a node unconnected to the node where agents 2 and 4 complete task $\sC$, we cannot determine whether task $\sB$ occurred before or after task $\sC$. As a result, the feature “task $\sB$ completed” is added to $U[\mathcal{D}_j]$ and treated as a possible—but not confirmed—precondition for task $\sC$.

We then label nodes as \emph{targets} if they satisfy the query (i.e., they contain $\tau_q$ completed by $\mathcal{G}_q$), and as \emph{non-targets} otherwise. Each node is encoded as a boolean vector over the relevant feature set $\mathcal{F}_q$, where each bit indicates whether the corresponding feature is satisfied along a path to that node. To avoid underestimating dependencies, features marked as uncertain in $U$ are conservatively treated as true in the boolean encoding.

To identify distinguishing conditions, we apply Quine-McCluskey~\cite{quine1952problem} to derive a minimal boolean formula that separates targets from non-targets. Finally, we translate the resulting formula into a language explanation using a structured language template, mapping certain features to “must” and uncertain ones to “may”. 

An example of a generated explanation is:  
“For agents 2 and 4 to complete task $\sC$, agent 2 must complete task $\sC$, agent 4 must complete task $\sC$, and task $\sA$ must be completed. Additionally, task $\sB$ \emph{may} need to be completed.”

While prior methods~\cite{hayes2017improving, boggess2022toward} also use Quine-McCluskey minimization followed by language translation, they do not account for the uncertainty introduced by decentralized execution. In contrast, \agref{ag:when} incorporates partial comparability graphs and an uncertainty dictionary to capture unordered task dependencies, enabling “may” conditions in the resulting explanations.

\startpara{Complexity}
The dominant cost of \agref{ag:when} is Quine-McCluskey minimization with worst-case complexity \(\mathcal{O}(3^{|\mathcal{F}_q|} / \ln |\mathcal{F}_q|)\). Other steps scale linearly with the number of Hasse diagrams and nodes. The method is tractable in practice for moderate feature sizes.

\subsection{Answering “Why Not” Queries} \label{sec:why}

To answer queries of the form: \emph{“Why don’t agents \(\mathcal{G}_q\) do task \(\tau_q\) under conditions \(\Phi_q\)?”}, we adapt the procedure used for “When” queries. Instead of identifying preconditions for successful completions, the goal is to isolate the minimal set of missing conditions that prevent \(\tau_q\) from occurring under the given scenario.

The key difference lies in how we define the target and non-target sets: the user-provided condition \(\Phi_q\) is encoded as the target (i.e., a case where the task did not occur), while nodes from Hasse diagrams where \(\tau_q\) is successfully completed by \(\mathcal{G}_q\) serve as non-targets. As in the “When” query, we construct partial comparability graphs to identify ordering uncertainty and maintain an uncertainty dictionary to track ambiguous dependencies. These are incorporated into the boolean encoding, allowing us to apply Quine-McCluskey minimization to identify which missing features distinguish the query condition from successful executions. The full algorithm pseudocode is provided in \appref{app:why} and shares the same complexity as \agref{ag:when}.

For instance, for the query “Why don’t agents 2 and 4 complete task $\sC$ when only task $\sA$ is completed?”, the resulting explanation could be: “Task $\sB$ \emph{may} need to be completed for agents 2 and 4 to complete task $\sC$.”

\subsection{Answering “What” Queries} \label{sec:what}

\begin{algorithm}[tb]
\caption{``What'' Query-Based Explanation}
\label{ag:what}
{\small
\textbf{Input:} Query task $\tau_q$, Hasse diagrams $\{\mathcal{D}_j\}$ \\
\textbf{Output:} Language-based explanation $\cX$
\begin{algorithmic}[1]
\STATE Initialize sets $\mathcal{T}^c \leftarrow \emptyset$, $\mathcal{T}^u \leftarrow \emptyset$
\FOR{each diagram $\mathcal{D}_j$}
  \FOR{each node $v \in \mathcal{D}_j$}
    \IF{$v$ contains $\tau_q$ completed by any agent group}
      \STATE Add tasks from all immediate children of $v$ to $\mathcal{T}^c$
      \STATE Compute partial comparability graph from $v$
      \FOR{each node $v'$ not reachable to/from $v$}
        \STATE Add tasks from $v'$ to $\mathcal{T}^u$
      \ENDFOR
    \ENDIF
  \ENDFOR
\ENDFOR
\STATE Translate $\mathcal{T}^c$ and $\mathcal{T}^u$ into explanation $\cX$
\RETURN $\cX$
\end{algorithmic}
}
\end{algorithm}

To answer queries of the form: \emph{“What do the agents do after task $\tau_q$?”}, we analyze the successors of $\tau_q$ across multiple Hasse diagrams. Our goal is to identify which tasks occur after $\tau_q$, distinguishing between those that are certainly ordered afterward and those that may follow, but whose ordering is ambiguous due to decentralized execution.

Given a set of Hasse diagrams $\{\mathcal{D}_j\}$ generated from simulated episodes, we first locate all nodes where $\tau_q$ is completed. For each such node, we add the tasks from its immediate children, representing actions that are explicitly ordered after $\tau_q$, to a set of certain successors $\mathcal{T}^c$.

To identify uncertain successors $\mathcal{T}^u$, we construct a partial comparability graph rooted at each node where $\tau_q$ is completed. We then collect tasks from nodes that are not ordered with respect to it. These tasks are added to a set $\mathcal{T}^u$ as possible, but not guaranteed, successors of $\tau_q$.

We generate an explanation using a language template that reports both the certain and uncertain successor sets. 
For example, for the query “What do agents do after task $\sC$ is completed?”, the explanation could be:  
“After task $\sC$ is completed, tasks $\sD$ and $\sE$ are completed. Additionally, task $\sB$ \emph{may} be completed.”

\startpara{Complexity}
The worst-case time complexity of \agref{ag:what} is \(\mathcal{O}(|\{\mathcal{D}_j\}| \cdot |\cV|^2(|\cV| + |\cE|))\), where \(|\{\mathcal{D}_j\}|\) is the number of Hasse diagrams, and \(|\cV|\) and \(|\cE|\) are the number of nodes and edges in each diagram, respectively.

\section{Computational Experiments} \label{sec:exp} 

\startpara{MARL Domains}
We evaluate our approaches on four benchmark domains:  
(1) \textit{Search and Rescue (SR)}~\cite{boggess2022toward}, where agents rescue victims and fight fires;  
(2) \textit{Level-Based Foraging (LBF)}~\cite{papoudakis2021benchmarking}, where agents collect food;  
(3) \textit{Multi-Robot Warehouse (RW)}~\cite{papoudakis2021benchmarking}, where agents pick up and deliver items; and  
(4) \textit{Pressure Plate (PP)}~\cite{pressureplate}, where agents open doors to enable others' navigation.  
All domains are gridworld-based. Agents observe nearby grid cells only: up to four per direction in PP and one per direction in other domains, reflecting the partial observability of decentralized execution.

\startpara{Experimental Setup}
We train policies using two MARL algorithms:
Shared Experience Actor-Critic (SEAC)~\cite{christianos2020shared} for CTDE, and
Independent Advantage Actor-Critic (IA2C)~\cite{papoudakis2021benchmarking} for DTDE.
Each model is trained until convergence or for up to 400 million steps.
All experiments are conducted on a machine with a 2.1 GHz Intel CPU, 132 GB RAM, and Ubuntu 22.04.

\subsection{Evaluation on Policy Summarization} \label{sec:exp-sum}

\startpara{Summarization Baseline}
Since no existing methods summarize decentralized MARL policies, we adapt the single-agent approach from~\cite{mccalmon2022caps} as a baseline. For each agent, we construct an \emph{abstract policy graph} that summarizes task sequences observed over 100 episodes, using the same abstract features as our HDS method for fair comparison. The resulting agent-specific graphs are displayed side-by-side and annotated with task sequence probabilities (see \figref{fig:baseline-sum} in \appref{app:exp}).

\startpara{Results}  
\tabref{tab:sum-ctde} compares CTDE policy summarization sizes (number of nodes and edges) produced by our HDS method and the baseline for the largest configuration in each domain (i.e., with the most agents and tasks).
HDS generates one Hasse diagram per episode, each representing a complete summary of all agents' behavior in that episode; we report average size over 100 episodes.
In contrast, the baseline displays all observed task sequences across episodes for all agents, resulting in large visualizations with hundreds of nodes and edges, which are significantly harder to interpret than the compact Hasse diagrams.
Similar results hold for DTDE policies trained with IA2C (see \appref{app:exp}).

Both HDS and the baseline are computationally efficient, each processing 100 episodes and generating summarizations in under one second across all domains.

Additionally, while HDS often produces unique Hasse diagrams across episodes, they typically fall into a small number of structural types based on edge counts. For example, SR(9,7) yields 100 unique diagrams but only 6 distinct edge counts. \figref{fig:piechart} in \appref{app:exp} illustrates the distribution of diagram types across domains, highlighting HDS’s ability to capture both behavioral diversity and structural regularity.

\begin{table}[t]
\centering
\begin{tabular}{l|cc|cc}
\toprule
\multicolumn{1}{c}{Domain} & \multicolumn{2}{c}{\textbf{HDS}} & \multicolumn{2}{c}{\textbf{Baseline}} \\
\cmidrule(lr){2-3} \cmidrule(lr){4-5}
\multicolumn{1}{c}{$(N, |T|)$} & $|\cV|$ & $|\cE|$ & $|\cV|$& $|\cE|$ \\
\midrule
SR (9,7)  & 8 & 7.88 & 534 & 525 \\
LBF (9,9) & 10 & 10.83 & 723 & 714 \\
RW (4,19) & 20 & 19 & 1,274 & 1,270 \\
PP (7,6)  & 7 & 6 & 265 & 258 \\
\bottomrule
\end{tabular}
\caption{Summarization sizes for HDS and the baseline on the largest setting in each domain, based on 100 episodes executed using CTDE policies trained with SEAC.}
\label{tab:sum-ctde}
\end{table}

\subsection{Evaluation on Query-Based Explanations} \label{sec:exp-query}

\startpara{Explanation Baseline}
Since no existing methods generate query-based explanations for decentralized MARL, we adapt the single-agent explanation approach from~\cite{hayes2017improving} by applying it independently to each agent's abstract policy graph (described in \sectref{sec:exp-sum}). The resulting per-agent explanations are then merged using simple union-based aggregation as a baseline.

\begin{table}[t]
\centering
\begin{tabular}{l|cc|cc}
\toprule
\multicolumn{1}{c}{Domain} & \multicolumn{2}{c}{\textbf{HD-When}} & \multicolumn{2}{c}{\textbf{Baseline}} \\
\cmidrule(lr){2-3} \cmidrule(lr){4-5}
\multicolumn{1}{c}{$(N, |T|)$} & $|\cF^c|$ & $|\cF^u|$ & $|\cF^c|$& $|\cF^u|$ \\
\midrule
SR (9,7)  & 9 & 2 & 54 & 0 \\
LBF (9,9) & 13 & 11 & 104 & 0 \\
RW (4,19) & 0 & 153 & 267 & 0 \\
PP (7,6)  & 8 & 3 & 20 & 0 \\
\bottomrule
\end{tabular}
\caption{Explanations sizes for our method and the baseline on the largest setting in each domain, based on 100 episodes executed using CTDE policies trained with SEAC.}
\label{tab:when-ctde}
\end{table}

\startpara{Results}
\tabref{tab:when-ctde} compares explanation sizes for “When” queries on CTDE policies, measured by the number of certain ($|\cF^c|$) and uncertain ($|\cF^u|$) features extracted by our method and the baseline. These features are derived from boolean formulas obtained through Quine-McCluskey minimization. In all cases, both methods generate explanations in under one second.

The baseline includes only certain features, as it does not capture task ordering uncertainty. It also produces significantly larger explanations due to the size of its aggregated policy graphs and union-based merging of per-agent results. In contrast, our HD-When method yields more compact and informative explanations, balancing certain and uncertain features. In RW(4,19), all features are marked uncertain due to highly asynchronous execution.

Similar trends are observed for DTDE policies and for other query types (see \appref{app:exp}).

\section{User Studies} \label{sec:study} 

We conducted two user studies (with IRB approval) to evaluate the effectiveness of our proposed policy summarizations and query-based explanations.

For both studies, we recruited participants via university mailing lists to answer surveys via Qualtrics. Eligible participants were fluent English speakers over 18 years old and were incentivized with bonus payments for correctly answering questions based on the provided summarizations or explanations. 
The summarization study included 20 participants (10 males, 9 females, 1 other), with an average age of 22.55 years (SD = 2.89). 
The explanation study included 21 participants (14 males, 6 females, 1 other), with an average age of 24 years (SD = 3.95).

We describe the study design and results for each study in \sectsectref{sec:study-sum}{sec:study-query}, respectively.

\subsection{Summarization Study} \label{sec:study-sum}

\startpara{Independent Variables}
The independent variable in this study was the summarization generation approach: our HDS method or the baseline described in \sectref{sec:exp-sum}. 
The baseline displays side-by-side abstract policy graphs for each agent, with each agent’s most likely task sequence highlighted (\figref{fig:baseline-sum}, \appref{app:exp}). 
To aid interpretation, the HDS interface presents a table of agent-task assignments and a list of task-ordering rules converted from each Hasse diagram. It shows the top three most frequent plans from 100 episodes, annotated with empirical likelihoods, and highlights the most likely one in red (\figref{fig:hds}, \appref{app:study}).

\startpara{Procedures}
The study followed a within-subject design where each participant completed two trials, one per summarization method (HDS and baseline). 
Each trial consisted of two summarizations, and each summarization was followed by three questions (12 questions total). 
Participants were randomly assigned to one of two groups to counterbalance ordering effects (HDS first or baseline first). 
All questions were randomized. Prior to each trial, participants received a brief tutorial and passed attention-check questions to ensure engagement. 
Bonus incentives and timing were used to promote data quality.

\startpara{Dependent Measures}
We assessed user performance based on the number of correctly answered questions in three categories: 
\emph{assignment} (e.g., “Can [robot(s)] complete [task]?”), 
\emph{likelihood} (e.g., “What are the most likely robot(s) to complete [task]?”), 
and \emph{order} (e.g., “Must [task 1] always be completed before [task 2]?”). 
Response time was also recorded for each question.

At the end of each trial, participants rated the summarization quality on a 5-point Likert scale across seven metrics~\cite{hoffman2018metrics}: 
\emph{understanding}, \emph{satisfaction}, \emph{detail}, \emph{completeness}, \emph{actionability}, \emph{reliability}, and \emph{trustworthiness}. 
Participants were informed of their accuracy before providing ratings.

\startpara{Hypotheses}
We hypothesized that, compared to baseline summarizations, HDS would (\textbf{H1}) improve user question-answering performance and (\textbf{H2}) receive higher ratings on summarization quality metrics.

\startpara{Results}
We found that users answered significantly more questions correctly using HDS (M=4.25 out of 6, SD=0.83) than with the baseline (M=3.1 out of 6, SD=1.04). A paired t-test confirms this difference is statistically significant (t(19)=4.2, p $\leq$ 0.01, d=0.96). \emph{The data supports H1.} 

Regarding user-perceived summarization quality, HDS was rated slightly higher in \emph{completeness} (Wilcoxon signed-rank, W=16.0, Z=-2.07, p $\leq$ 0.04, r=-0.33), but not significantly different in other dimensions (see \figref{fig:sumGoodRating}). \emph{The data partially supports H2.} 

\begin{figure}[t]
    \centering
    \includegraphics[width=1.0\columnwidth]{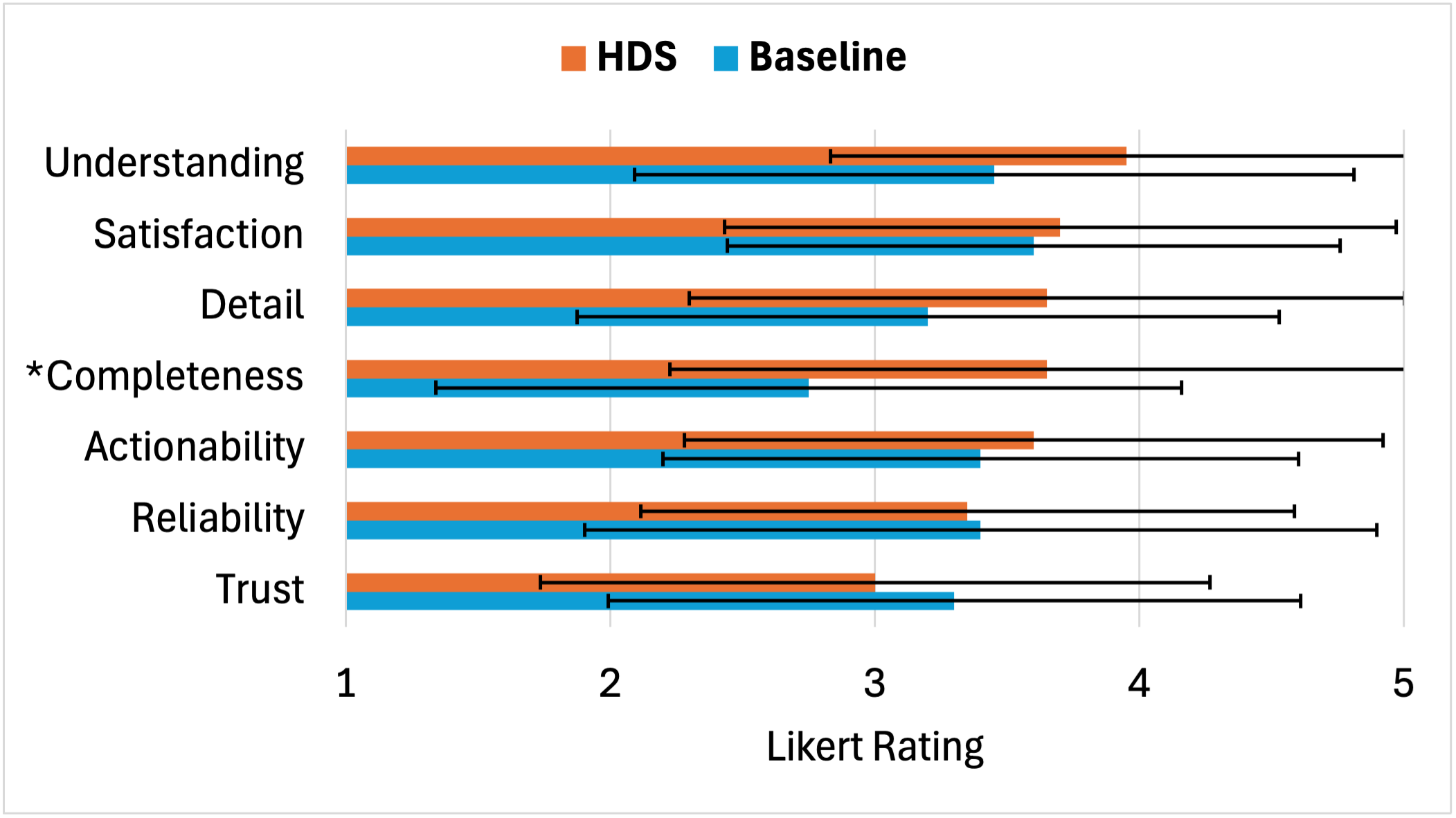}
    \caption{Mean and standard deviation of participant ratings on policy summarizations (* indicates statistically significant difference).}
    \label{fig:sumGoodRating}
\end{figure}

\startpara{Discussion}
These results suggest that HDS improves objective user performance in answering questions that require understanding task coordination across agents. 
Because the baseline does not explicitly model inter-agent cooperation, users must compare across graphs to infer coordination, which can be cognitively demanding. Moreover, the baseline may mislead by showing each agent’s most likely task sequence independently, which may not reflect the most likely joint behavior in coordination.

Subjective ratings show limited preference for HDS, possibly due to user familiarity with the baseline’s flowchart-style layout despite its reduced clarity on coordination.  

Finally, we note that users' response times remained comparable between methods. In some cases, HDS enabled faster responses (e.g., for likelihood queries).

\subsection{Explanation Study} \label{sec:study-query}

\startpara{Independent Variables}
The independent variable was the explanation generation method: our proposed approach versus the baseline described in \sectref{sec:exp-query}. 
Example interfaces for all three query types (“When,” “Why Not,” and “What”) are shown in \appref{app:study}. 

For “When” queries, users viewed a map of a search-and-rescue scenario (with four agents and four tasks) and an accompanying explanation. Our method outputs both required (certain) and possible (uncertain) conditions, while the baseline includes only certain ones. 

For “Why Not” queries, users were shown two maps: the first with a failed task and an explanation of the violated conditions, and the second used to test whether the explanation helped predict behavior. 

For “What” queries, users received an explanation about what tasks could occur next. Our method distinguishes certain and uncertain tasks; the baseline lists only certain ones.

\startpara{Procedures}
The study followed a within-subject design, where each participant completed two trials—one using our method, and one using the baseline. Each trial included two questions per query type (“When,” “Why Not,” and “What”), totaling 12 questions. Method order was counterbalanced across participants to mitigate ordering effects. The study included a demonstration, attention-check questions, bonus incentives, and timing to ensure data quality.  

\startpara{Dependent Measures}
User performance was measured by the number of correctly answered prediction questions, reported separately for each query type. Response time per question was also recorded.
After each trial, participants rated explanation quality on a 5-point Likert scale~\cite{hoffman2018metrics} using the same seven metrics described in \sectref{sec:study-sum}.

\startpara{Hypotheses}
We hypothesized that, compared to the baseline, our generated explanations would (\textbf{H3}) improve user question-answering performance and (\textbf{H4}) receive higher ratings on explanation quality metrics.

\startpara{Results}
As shown in \figref{fig:expCorrectRating}, users answered significantly more questions correctly using our HDE explanations compared to the baseline across all three query types. 
Paired t-tests ($\alpha$ = 0.05) confirm the improvement for \textit{when} (t(20)=9.65, p $\leq$ 0.01, d=2.16), 
\textit{why not} (t(20)=13.23, p $\leq$ 0.01, d=2.96), 
and \textit{what} (t(20)=12.05, p $\leq$ 0.01, d=2.69).
\emph{The data supports H3.}

\begin{figure}[t]
    \centering
    \includegraphics[width=1.0\columnwidth]{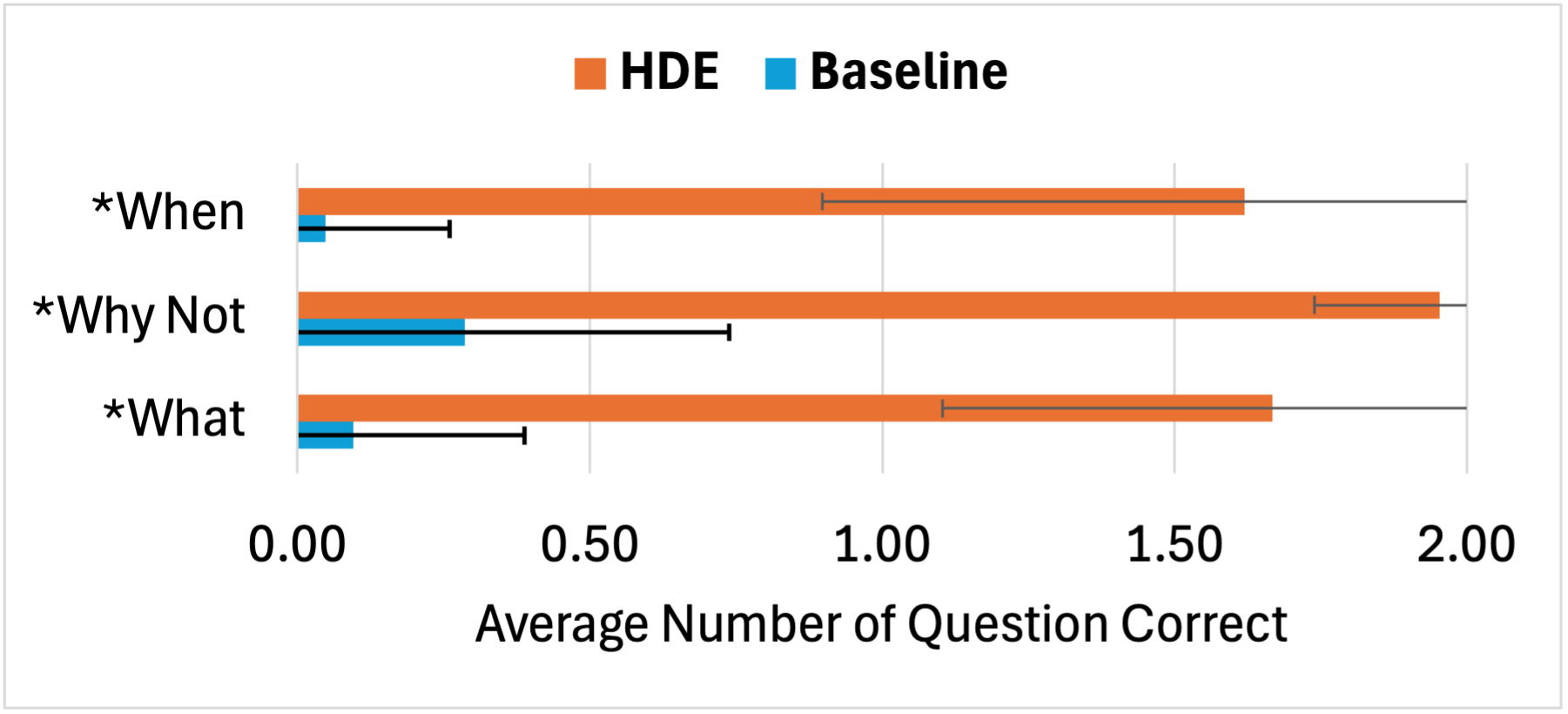}
    \caption{Mean and standard deviation of participant performance on query-based explanations (* indicates statistically significant difference).}
    \label{fig:expCorrectRating}
\end{figure}

\figref{fig:expGoodnessRating} shows participant ratings on explanation quality. 
Wilcoxon signed-rank tests ($\alpha$ = 0.05) indicate significantly higher ratings for HDE across all seven metrics: 
\textit{understanding} (W=3.5, Z=-3.02, p $\leq$ 0.01, r=-0.47), 
\textit{satisfaction} (W=5.0, Z=-3.01, p $\leq$ 0.01, r=-0.46), 
\textit{detail} (W=4.5, Z=-3.02, p $\leq$ 0.01, r=-0.47), 
\textit{completeness} (W=0.0, Z=-3.21, p $\leq$ 0.01, r=-0.49), 
\textit{actionability} (W=4.0, Z=-2.53, p $\leq$ 0.02, r=-0.39), 
\textit{reliability} (W=3.0, Z=-2.78, p $\leq$ 0.01, r=-0.43), 
and \textit{trust} (W=0.0, Z=-2.68, p $\leq$ 0.01, r=0.41). 
\emph{The data supports H4.}

\begin{figure}[t]
    \centering
    \includegraphics[width=1.0\columnwidth]{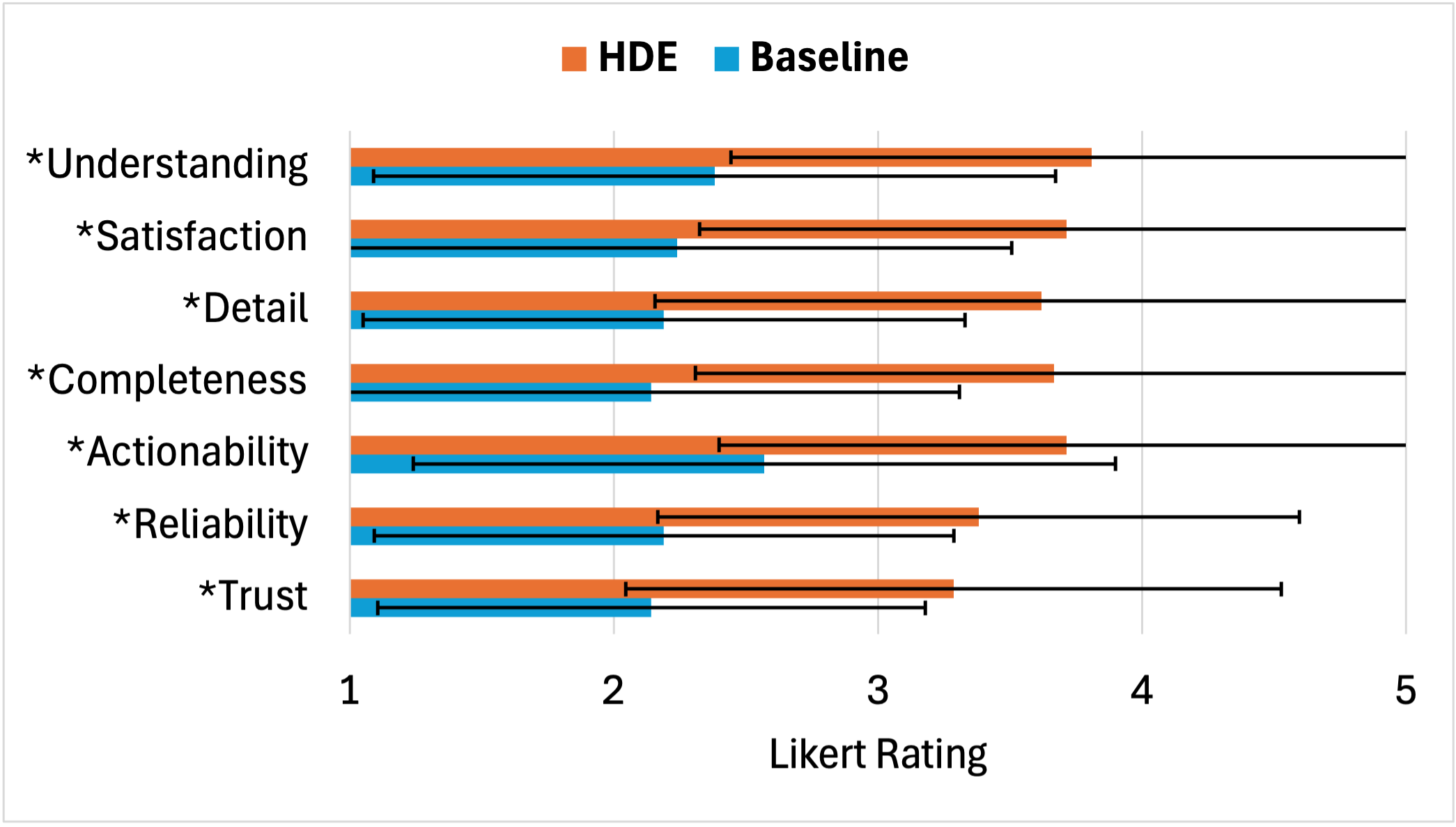}
    \caption{Mean and standard deviation of participant ratings on query-based explanations (* indicates statistically significant difference).}
    \label{fig:expGoodnessRating}
\end{figure}

\startpara{Discussion}
The improvement in user performance is likely due to our explanation method’s ability to present both certain and uncertain features, enabling users to more accurately predict if and what tasks will occur. In contrast, the baseline captures only agent-specific behavior and lacks inter-agent context. It may omit shared task dependencies or include irrelevant conditions observed by individual agents, requiring users to reconcile fragmented information—reducing its effectiveness in decentralized settings.

Participants also gave higher subjective ratings to our explanations across all goodness metrics, suggesting they valued access to decentralized task dependencies and uncertainty, even if the explanations were more complex. Notably, the inclusion of uncertain features did not increase response time, indicating that users could efficiently interpret the richer information.

\section{Conclusion} \label{sec:conclu} 

We presented novel approaches for summarizing and explaining decentralized MARL policies. 
Computational experiments across four MARL domains and two learning algorithms show that our method is scalable and efficient, generating compact summarizations and meaningful explanations even in large environments with many agents and tasks. 
User studies further demonstrate that our approach improves user performance and perceived explanation quality, without increasing response time.

Future work includes integrating these explanations into interactive human-agent systems, supporting more expressive user queries, and leveraging large language models to enhance explanation clarity and usability.

\clearpage

\section*{Acknowledgements}
This work was supported in part by the U.S. National Science Foundation grant CCF-1942836, and Israel Ministry of Innovation, Science \& Technology grant 1001818511. The opinions, findings, conclusions, or recommendations expressed in this material are those of the author(s) and do not necessarily reflect the views of the sponsoring agencies.

\bibliography{references}

\appendix

\section{Proof of \thmref{thm:hds}} \label{app:proof}

\setcounter{theorem}{0}
\begin{theorem}
Given a set of agent trajectories $\{\omega^i\}_{i=1}^N$ produced by executing decentralized MARL policies $\{\pi^i\}_{i=1}^N$ in a single episode, the Hasse diagram $\cD = (\cV, \cE)$ constructed by \agref{ag:hds} is both a correct and complete policy summarization. 
\end{theorem}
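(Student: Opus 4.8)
The plan is to prove correctness and completeness separately, both resting on a single structural lemma about the graph produced by \agref{ag:hds}. Since the trajectories come from one episode, each task has a well-defined global completion time, with jointly completed tasks sharing that time; I would first record that (i) each task occurs in \emph{exactly one} vertex of $\cD$ — a fresh vertex is created for $\tau$ only when no existing vertex already contains it — and (ii) every edge inserted during construction goes from a strictly-earlier vertex to a strictly-later one: the seed edges leave $v_0$, which we treat as preceding all tasks, and every edge $\bar v \to v$ links $T^i_{k-1}$ to $T^i_k$, which agent $i$'s own trace lists in real completion order. Hence the intermediate graph is a DAG in which every directed path visits vertices in strictly increasing completion time. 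The transitive-reduction loop deletes an edge $(u\to v)$ only when an alternative directed path already witnesses $u \rightsquigarrow v$, so it preserves both acyclicity and the reachability relation; therefore the returned $\cD$ is a genuine Hasse diagram and still has strict time-monotonicity along every directed path. I expect assembling this lemma to be the crux — in particular justifying that the union of the per-agent local orders is acyclic, which is exactly where the single-episode hypothesis and the assumption that joint tasks complete simultaneously are needed (two agents can never witness opposite orders of the same pair), together with the mild side condition, guaranteed by the domain setup, that no task is completed twice in an episode.

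For \emph{completeness}, fix an agent $i$ with $\trace{\omega^i} = T^i_1,\dots,T^i_m$ and let $v_{(k)}$ be the unique vertex holding $T^i_k$. During the inner loop for agent $i$, \agref{ag:hds} inserts the edge $v_0\to v_{(1)}$ and, for each $k\ge 2$, an edge $v_{(k-1)}\to v_{(k)}$, so before transitive reduction $v_0\to v_{(1)}\to\cdots\to v_{(m)}$ is a directed path. Because transitive reduction preserves reachability, in the final $\cD$ there is still a directed path from $v_0$ to $v_{(1)}$ and from $v_{(k-1)}$ to $v_{(k)}$ for every $k$; concatenating these gives a directed walk from $v_0$ to $v_{(m)}$, which in a DAG is automatically a simple path $\rho$. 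No internal vertex of $\rho$ can carry a task of agent $i$: such a task would be some $T^i_{k'}$ whose completion time, by the monotonicity lemma, lies strictly between those of $T^i_{k-1}$ and $T^i_k$ (or before that of $T^i_1$, or after that of $T^i_m$), forcing $k-1 < k' < k$ (resp. $k'<1$ or $k'>m$), impossible for integers. Hence $\rho$ — possibly extended to a maximal path without introducing further agent-$i$ tasks — satisfies $\rho^i = T^i_1,\dots,T^i_m = \trace{\omega^i}$.

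For \emph{correctness}, take any directed path $\rho$ from $v_0$ and any agent $i$ with $\rho^i \neq \emptyset$. Each vertex of $\rho$ contributing to $\rho^i$ holds some task $T^i_k$, and along $\rho$ these vertices' completion times strictly increase by the lemma; since agent $i$'s trace $\trace{\omega^i}$ orders its tasks by completion time, the corresponding indices $k$ strictly increase along $\rho$ as well, so $\rho^i$ is an order-preserving subsequence of $\trace{\omega^i}$, i.e. $\rho^i \sqsubseteq \trace{\omega^i}$. Together with the previous paragraph this gives both properties. The only routine items left are the standard facts that the transitive-reduction procedure of lines 20--24 preserves reachability and acyclicity, and a one-line induction on the construction order showing every vertex of $\cD$ is reachable from $v_0$.
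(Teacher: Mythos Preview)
Your proof is correct and follows the same two-part strategy as the paper: correctness from the fact that every inserted edge respects completion order (so every path's projection is order-preserving), and completeness from the edges the algorithm inserts for each consecutive pair in each agent's trace together with the reachability-preserving property of transitive reduction. Your treatment is in fact more careful than the paper's own proof --- you make the global time-monotonicity lemma explicit, pinpoint where the single-episode hypothesis is actually used, and close a gap the paper leaves open (verifying that intermediate vertices introduced along the concatenated path after transitive reduction cannot carry extra agent-$i$ tasks).
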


\begin{proof}
\textbf{(Correctness)} Suppose, for contradiction, that the Hasse diagram $\cD = (\cV, \cE)$ produced by \agref{ag:hds} is not a correct policy summarization. Then there exists a path $\rho$ through $\cD$ such that its projection $\rho^i$ onto some agent $i$ does not conform to the agent’s task sequence $\trace{\omega^i}$.

This implies that in $\rho^i$ there is a task $\tau^i_k$ that precedes another task $\tau^i_{k'}$ with $k > k'$, yet $\tau^i_k$ appears earlier in the path than $\tau^i_{k'}$. Let $v$ and $v'$ be the nodes in $\rho$ that contain $\tau^i_k$ and $\tau^i_{k'}$, respectively. Since $\rho$ is a valid path, it must be that $v \prec v'$, meaning tasks in $v$ occur before those in $v'$. But then $\tau^i_k$ should occur before $\tau^i_{k'}$ in $\trace{\omega^i}$, contradicting $k > k'$. Hence, all projections must conform, and the diagram is correct.

\textbf{(Completeness)} The algorithm explicitly iterates over each trajectory $\omega^i$ and processes its task sequence $\trace{\omega^i}$ in order. For every consecutive pair of tasks $(\tau^i_{k-1}, \tau^i_k)$, an edge is added from the vertex containing $\tau^i_{k-1}$ to the one containing $\tau^i_k$. Thus, for each agent, a path through the DAG is constructed that corresponds exactly to $\trace{\omega^i}$.

In the transitive reduction step, an edge $(v, v')$ is removed only if there already exists an alternative path from $v$ to $v'$. Hence, the path corresponding to $\trace{\omega^i}$ is preserved. Therefore, for every agent $i$, there exists at least one path $\rho$ in $\cD$ such that $\rho^i = \trace{\omega^i}$, proving completeness.
\end{proof}

\section{Algorithm for “Why Not” Queries} \label{app:why}

The pseudocode below outlines our method for answering “Why don’t agents $\mathcal{G}_q$ do task $\tau_q$ under conditions $\Phi_q$?” queries, as described in \sectref{sec:why}. 

\begin{algorithm}[hb]
\caption{``Why Not'' Query-Based Explanation}
\label{ag:why}
{\small
\textbf{Input:} Agent group $\mathcal{G}_q$, query task $\tau_q$, query conditions $\Phi_q$, Hasse diagrams $\{\mathcal{D}_j\}$, and feature set $\mathcal{F}$ \\
\textbf{Output:} Language-based explanation $\cX$
\begin{algorithmic}[1]
\STATE Extract relevant features $\mathcal{F}_q \subseteq \mathcal{F}$ for $\tau_q$
\STATE Initialize uncertainty dictionary $U \leftarrow \emptyset$
\FOR{each diagram $\mathcal{D}_j$}
  \IF{$\tau_q$ completed by $\mathcal{G}_q$ in $\mathcal{D}_j$}
    \STATE Let $v_\tau$ be the node where $\tau_q$ is completed
    \STATE Compute partial comparability graph from $v_\tau$
    \FOR{each node $v$ not reachable to/from $v_\tau$}
      \STATE Add features associated with $v$ to $U[\mathcal{D}_j]$
    \ENDFOR
  \ENDIF
\ENDFOR
\STATE Encode $\Phi_q$ as boolean vector $B_1$ over $\mathcal{F}_q$
\STATE Encode non-target nodes as boolean vectors $B_0$ using $U$
\STATE Apply Quine-McCluskey to distinguish $B_1$ from $B_0$
\STATE Translate resulting formula into explanation $\cX$
\RETURN $\cX$
\end{algorithmic}
}
\end{algorithm}

\section{Details on Computational Experiments} \label{app:exp}

\startpara{Baseline Summarization Visualization}
\figref{fig:baseline-sum} shows an example of the baseline summarization used throughout our evaluations. Each agent’s abstract policy graph summarizes its possible task sequences over 100 execution episodes, with sequence-level probabilities. These per-agent graphs are displayed side-by-side, requiring users to manually infer inter-agent task coordination, which is a key limitation compared to our HDS method.

\begin{figure}[ht]
    \centering
    \includegraphics[width=\columnwidth]{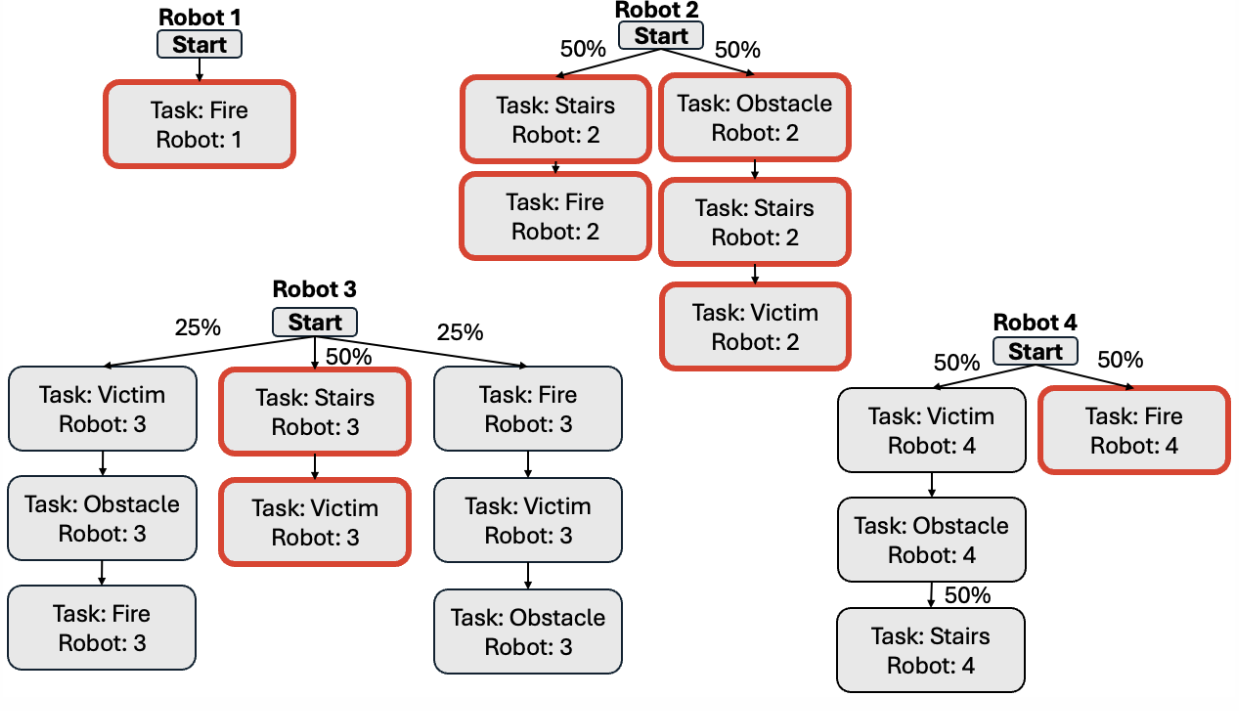}
    \caption{Example of baseline summarization. }
    \label{fig:baseline-sum}
\end{figure}

\startpara{DTDE Policy Summarization Results}
\tabref{tab:sum-dtde} reports summarization statistics for HDS and the baseline using DTDE policies trained with IA2C. Due to convergence issues of training IA2C in larger environments, we report results only for LBF(5,5) and RW(3,4), rather than LBF(9,9) and RW(4,19) used in the CTDE setting with SEAC. As in all CTDE cases, HDS yields substantially more compact summarizations, whereas the baseline produces significantly larger graphs.

\begin{table}[h]
\centering
\begin{tabular}{l|cc|cc}
\toprule
\multicolumn{1}{c}{Domain} & \multicolumn{2}{c}{\textbf{HDS}} & \multicolumn{2}{c}{\textbf{Baseline}} \\
\cmidrule(lr){2-3} \cmidrule(lr){4-5}
\multicolumn{1}{c}{$(N, |T|)$} & $|\cV|$ & $|\cE|$ & $|\cV|$& $|\cE|$ \\
\midrule
SR (9,7)  & 8 & 7.79 & 350 & 341 \\
LBF (5,5) & 6 & 5.62 & 151 & 146 \\
RW (3,4) & 5 & 4 & 69 & 66 \\
PP (7,6)  & 7 & 6 & 107 & 100 \\
\bottomrule
\end{tabular}
\caption{Summarization sizes for HDS and the baseline on selected domain settings, based on 100 episodes executed using DTDE policies trained with IA2C. }
\label{tab:sum-dtde}
\end{table}

\startpara{Hasse Diagram Diversity}
\figref{fig:piechart} shows the distribution of Hasse diagram types across 100 episodes for selected domain-policy pairs. Although individual executions often yield unique diagrams, they typically fall into a small number of structural categories based on edge count. For instance, SR(9,7) has 100 unique diagrams but only 6 distinct edge counts. This highlights HDS's ability to preserve behavioral variation while maintaining structural compactness.

\begin{figure}[ht]
    \centering
    \includegraphics[width=0.8\columnwidth]{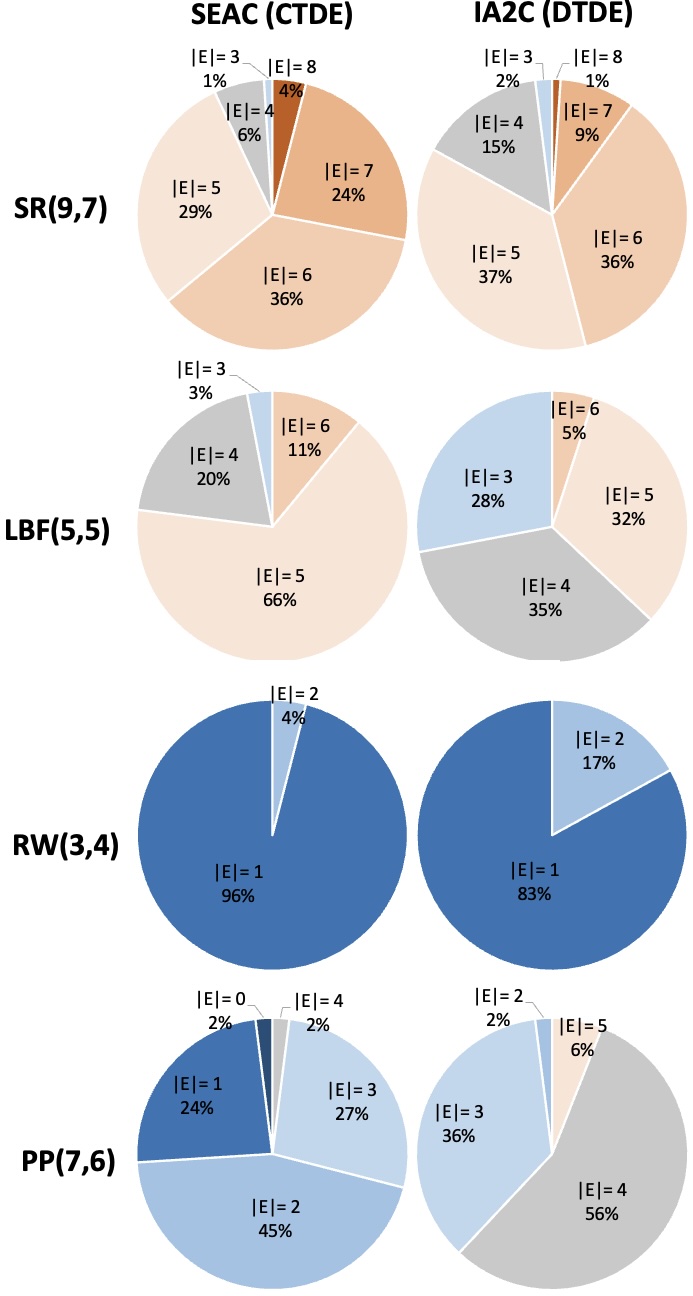}
    \caption{Distribution of Hasse diagram types across 100 episodes for each domain and training method. Each slice represents a unique diagram type categorized by edge count, illustrating structural diversity captured by HDS.}
    \label{fig:piechart}
\end{figure}

\startpara{Additional Explanation Results}
\tabtabtabref{tab:when-dtde}{tab:what-dtde}{tab:what-ctde} provide additional explanation statistics for HD-When and HD-What under DTDE policies, and HD-What under CTDE policies. These tables report the number of certain and uncertain features or tasks included in the explanations, based on 100 episodes. Results in \tabref{tab:when-dtde} show trends consistent with those observed in \tabref{tab:when-ctde} for CTDE “When” queries: our methods produce compact, structured explanations with meaningful uncertainty, while the baseline includes more features but no uncertainty terms. 
Results in \tabtabref{tab:what-dtde}{tab:what-ctde} show explanations with increased information due to captured coordination and task order not present in the baseline.

\begin{table}[ht]
\centering
\begin{tabular}{l|cc|cc}
\toprule
\multicolumn{1}{c}{Domain} & \multicolumn{2}{c}{\textbf{HD-When}} & \multicolumn{2}{c}{\textbf{Baseline}} \\
\cmidrule(lr){2-3} \cmidrule(lr){4-5}
\multicolumn{1}{c}{$(N, |T|)$} & $|\cF^c|$ & $|\cF^u|$ & $|\cF^c|$& $|\cF^u|$ \\
\midrule
SR (9,7)  & 5 & 5 & 45 & 0 \\
LBF (5,5) & 14 & 4 & 41 & 0 \\
RW (3,4) & 0 & 2 & 2 & 0 \\
PP (7,6)  & 8 & 3 & 19 & 0 \\
\bottomrule
\end{tabular}
\caption{Explanations sizes for our method and the baseline on selected domain settings, based on 100 episodes executed using DTDE policies trained with IA2C. }
\label{tab:when-dtde}
\end{table}
\begin{table}[ht]
\centering
\begin{tabular}{l|cc|cc}
\toprule
\multicolumn{1}{c}{Domain} & \multicolumn{2}{c}{\textbf{HD-What}} & \multicolumn{2}{c}{\textbf{Baseline}} \\
\cmidrule(lr){2-3} \cmidrule(lr){4-5}
\multicolumn{1}{c}{$(N, |T|)$} & $|\cT^c|$ & $|\cT^u|$ & $|\cT^c|$& $|\cT^u|$ \\
\midrule
SR (9,7)  & 1 & 7 & 3 & 0 \\
LBF (5,5) & 4 & 0 & 3 & 0 \\
RW (3,4) & 1 & 3 & 2 & 0 \\
PP (7,6)  & 2 & 3 & 2 & 0 \\
\bottomrule
\end{tabular}
\caption{Explanations sizes for our method and the baseline on selected domain settings, based on 100 episodes executed using DTDE policies trained with IA2C. }
\label{tab:what-dtde}
\end{table}
\begin{table}[ht]
\centering
\begin{tabular}{l|cc|cc}
\toprule
\multicolumn{1}{c}{Domain} & \multicolumn{2}{c}{\textbf{HD-What}} & \multicolumn{2}{c}{\textbf{Baseline}} \\
\cmidrule(lr){2-3} \cmidrule(lr){4-5}
\multicolumn{1}{c}{$(N, |T|)$} & $|\cT^c|$ & $|\cT^u|$ & $|\cT^c|$& $|\cT^u|$ \\
\midrule
SR (9,7)  & 3 & 0 & 2 & 0 \\
LBF (9,9) & 8 & 0 & 6 & 0 \\
RW (4,19) & 9 & 11 & 10 & 0 \\
PP (7,6)  & 1 & 5 & 2 & 0 \\
\bottomrule
\end{tabular}
\caption{Explanations sizes for our method and the baseline on the largest setting in each domain, based on 100 episodes executed using CTDE policies trained with SEAC.}
\label{tab:what-ctde}
\end{table}

\section{Details on User Studies} \label{app:study}

\startpara{Summarization Study Interface Example}
\figref{fig:hds} shows an example of the user interface used in the summarization study for presenting HDS-generated summaries. Each summary consists of a table showing agent-task assignments and a list of natural language rules representing partial task order, converted from a Hasse diagram. The interface displays the top three most frequent summaries (based on 100 episodes), with empirical likelihoods shown beneath each. The most likely summary is visually emphasized with a red border.

\begin{figure}[ht]
    \centering
    \includegraphics[width=\columnwidth]{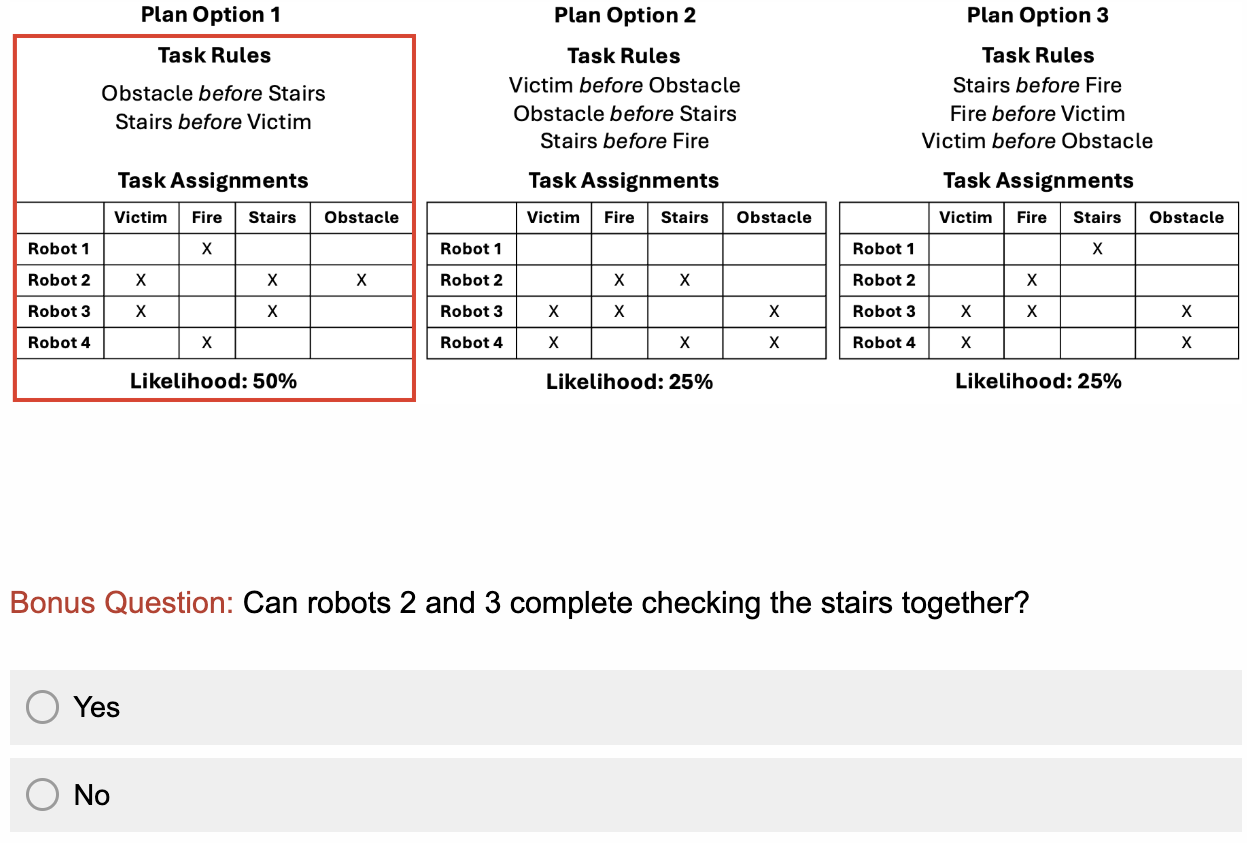}
    \caption{Example user interface displaying an HDS-generated summarization and associated question.}
    \label{fig:hds}
\end{figure}

\startpara{Explanation Study Interface Examples}
\figfigfigref{fig:hd-when}{fig:hd-why}{fig:hd-what} show example user interfaces used in the explanation study for the “When,” “Why Not,” and “What” queries, respectively. Each interface presents a scenario with agents and tasks, along with an HD-generated explanation and a follow-up question for participants to answer.

\begin{figure}[ht]
    \centering
    \includegraphics[width=0.9\columnwidth]{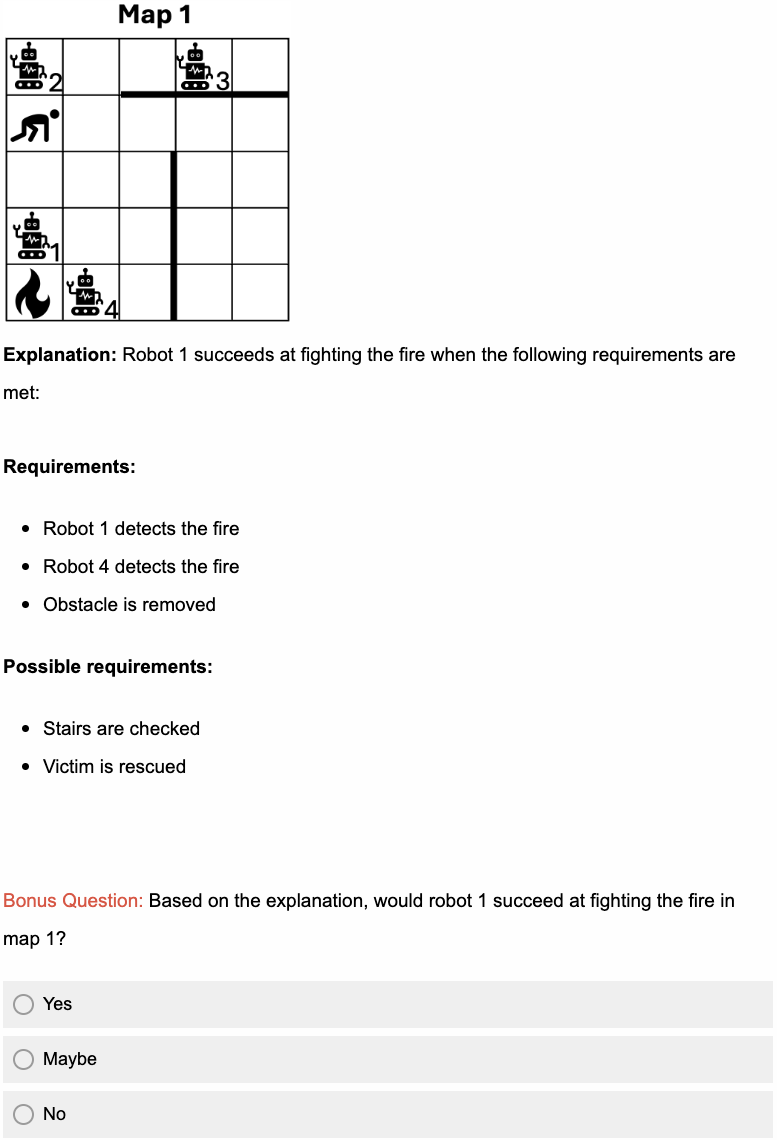}
    \caption{Example user interface displaying an HD-When explanation and associated question.}
    \label{fig:hd-when}
\end{figure}

\begin{figure}[ht]
    \centering
    \includegraphics[width=0.9\columnwidth]{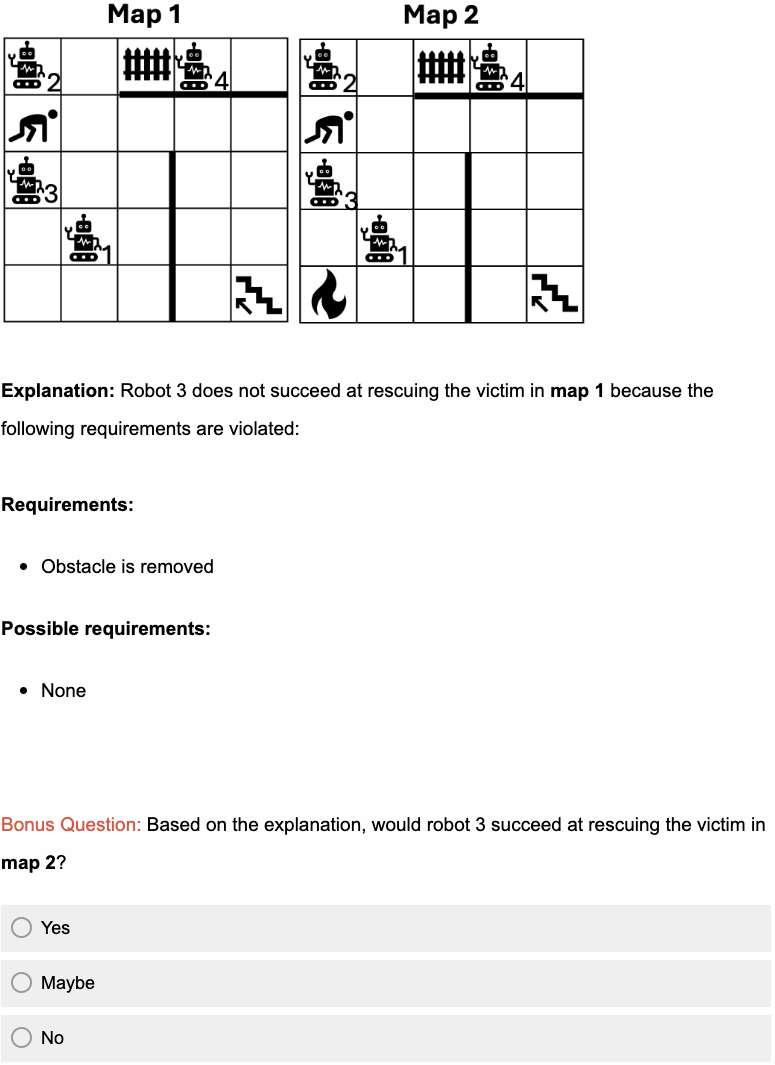}
    \caption{Example user interface displaying an HD-WhyNot explanation and associated question.}
    \label{fig:hd-why}
\end{figure}

\begin{figure}[ht]
    \centering
    \includegraphics[width=0.9\columnwidth]{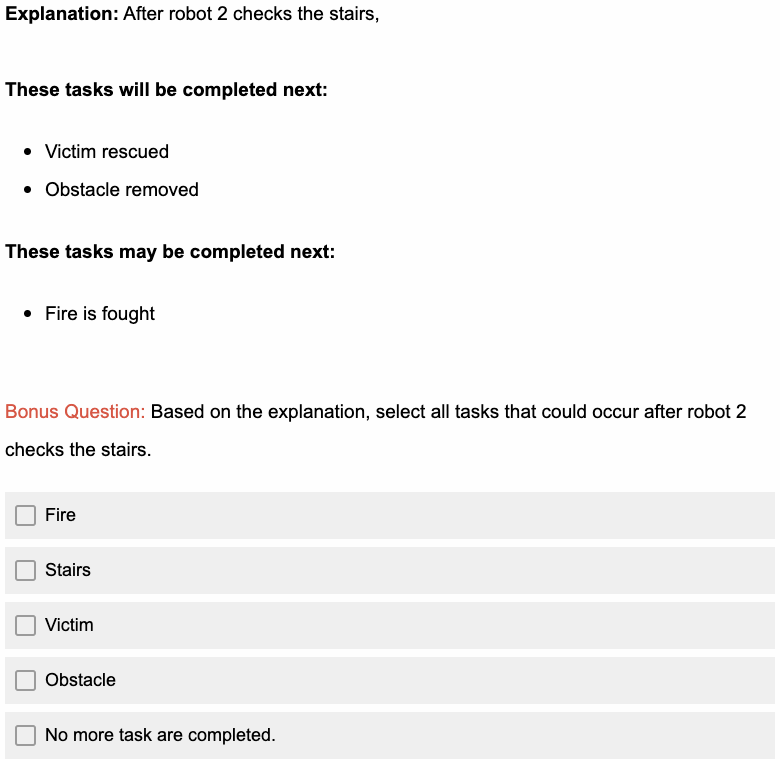}
    \caption{Example user interface displaying an HD-What explanation and associated question.}
    \label{fig:hd-what}
\end{figure}

\end{document}